\newtheorem{thm}{Theorem}
\newtheorem{lemma}{Lemma}
\newcommand{\minimize}{\operatorname*{minimize}}
\newcommand{\maximize}{\operatorname*{maximize}}
\newcommand{\bfx}{\mathbf{x}}
\newcommand{\bfz}{\mathbf{z}}
\newcommand{\bfh}{\mathbf{h}}
\newcommand{\bfc}{\mathbf{c}}
\newcommand{\diag}{\mathop{\rm diag}}
\newcommand{\bftheta}{\boldsymbol\theta}
\newcommand{\argmin}{\mathop{\rm argmin}}
\icmltitlerunning{Efficient end-to-end learning for quantizable representations}
\begin{document}

\twocolumn[
\icmltitle{Efficient end-to-end learning for quantizable representations}

\icmlsetsymbol{equal}{*}

\begin{icmlauthorlist}
\icmlauthor{Yeonwoo Jeong}{snu}
\icmlauthor{Hyun Oh Song}{snu}
\end{icmlauthorlist}

\icmlaffiliation{snu}{Department of Computer Science and Engineering, Seoul National University, Seoul, Korea}

\icmlcorrespondingauthor{Hyun Oh Song}{hyunoh@snu.ac.kr}

\vskip 0.3in
]

\printAffiliationsAndNotice{}  

\begin{abstract}
    Embedding representation learning via neural networks is at the core foundation of modern similarity based search. While much effort has been put in developing algorithms for learning binary hamming code representations for search efficiency, this still requires a linear scan of the entire dataset per each query and trades off the search accuracy through binarization. To this end, we consider the problem of directly learning a quantizable embedding representation and the sparse binary hash code end-to-end which can be used to construct an efficient hash table not only providing significant search reduction in the number of data but also achieving the state of the art search accuracy outperforming previous state of the art deep metric learning methods. We also show that finding the optimal sparse binary hash code in a mini-batch can be computed exactly in polynomial time by solving a \emph{minimum cost flow} problem. Our results on Cifar-100 and on ImageNet datasets show the state of the art search accuracy in precision@k and NMI metrics while providing up to $98\times$ and $478\times$ search speedup respectively over exhaustive linear search. The source code is available at \href{https://github.com/maestrojeong/Deep-Hash-Table-ICML18}{https://github.com/maestrojeong/Deep-Hash-Table-ICML18}.
\end{abstract} 
\section{Introduction} 
Learning the representations that respect the pairwise relationships is one of the most important problems in machine learning and pattern recognition with vast applications. To this end, deep metric learning methods \cite{contrastive, triplet, facenet, liftedstruct, npairs, facility, seanbell, domaintransduction} aim to learn an embedding representation space such that similar data are close to each other and vice versa for dissimilar data. Some of these methods have shown significant advances in various applications in retrieval \cite{npairs, facenet}, clustering \cite{facility}, domain adaptation \cite{domaintransduction}, video understanding \cite{triplet_video}, etc. 

Despite recent advances in deep metric learning methods, deploying the learned embedding representation in large scale applications poses great challenges in terms of the inference efficiency and scalability. To address this, practitioners in large scale retrieval and recommendation systems often resort to a separate post-processing step where the learned embedding representation is run through quantization pipelines such as sketches, hashing, and vector quantization in order to significantly reduce the number of data to compare during the inference while trading off the accuracy.

In this regard, we propose a novel end-to-end learning method for quantizable representations jointly optimizing for the quality of the network embedding representation and the performance of the corresponding binary hash code. In contrast to some of the recent methods \cite{cao2016deep, liu2017learning}, our proposed method avoids ever having to cluster the entire dataset, offers the modularity to accommodate any existing deep embedding learning techniques \cite{facenet, npairs}, and is efficiently trained in a mini-batch stochastic gradient descent setting. We show that the discrete optimization problem of finding the optimal binary hash codes given the embedding representations can be computed efficiently and exactly by solving an equivalent \emph{minimum cost flow problem}. The proposed method alternates between finding the optimal hash codes of the given embedding representations in the mini-batch and adjusting the embedding representations indexed at the activated hash code dimensions via deep metric learning methods.

Our end-to-end learning method outperforms state-of-the-art deep metric learning approaches \cite{facenet, npairs} in retrieval and clustering tasks on the Cifar-100 \cite{cifar100} and the ImageNet \cite{imagenet} datasets while providing up to several orders of magnitude speedup during inference. Our method utilizes efficient off-the-shelf implementations from OR-Tools (Google optimization tools for combinatorial optimization problems) \cite{ortools}, the deep metric learning library implementation in Tensorflow \cite{tensorflow}, and is efficient to train. The state of the art deep metric learning approaches \cite{facenet, npairs} use the class labels during training (for the hard negative mining procedure) and since our method utilizes the approaches as a component, we focus on the settings where the class labels are available during training. 

\section{Related works} \vspace{-0.3em}
Learning the embedding representation via neural networks dates back to over two decades ago. Starting with Siamese networks \cite{signatureVerification, contrastive}, the task is to learn embedding representation of the data using neural networks so that similar examples are close to each other and dissimilar examples are farther apart in the embedding space. Despite the recent successes and near human performance reported in retrieval and verification tasks \cite{facenet}, most of the related literature on learning efficient representation via deep learning focus on learning the binary hamming codes for finding nearest neighbors with linear search over entire dataset per each query. Directly optimizing for the embedding representations in deep networks for quantization codes and constructing the hash tables for significant search reduction in the number of data is much less studied.

\cite{xia2014supervised} first precompute the hash code based on the labels and trains the embedding to be similar to the hash code. \cite{zhao2015deep} apply element-wise sigmoid on the embedding and minimizes the triplet loss. \cite{hammingmetric} optimize the upper bound on the triplet loss defined on hamming code vectors. \cite{liong2015deep} minimize the difference between the original and the signed version of the embedding with orthogonality regularizers in a network. \cite{li2017deep} employ discrete cyclic coordinate descent \cite{shen2015supervised} on a discrete sub-problem optimizing one hash bit at a time but the algorithm has neither the convergence guarantees nor the bound on the number of iterations. All of these methods focus on learning the binary codes for hamming distance ranking and perform an exhaustive linear search over the entire dataset which is not likely to be suitable for large scale problems.

\cite{cao2016deep} minimize the difference between the similarity label and the cosine distance of network embedding. \cite{liu2017learning} define a distance between a quantized data and continuous embedding, and back-propagates the metric loss error only with respect to the continuous embedding. Both of these methods require \emph{repeatedly} running k-means clustering on the \emph{entire} dataset while training the network at the same time. This is unlikely to be practical for large scale problems because of the prohibitive computational complexity and having to store the cluster centroids for all classes in the memory as the number of classes becomes extremely large \cite{extreme_varma, extreme_langford}.

In this paper, we propose an efficient end-to-end learning method for quantizable representations jointly optimizing the quality of the embedding representation and the performance of the corresponding hash codes in a scalable mini-batch stochastic gradient descent setting in a deep network and demonstrate state of the art search accuracy and quantitative search efficiency on multiple datasets.  \vspace{-0.3em}
\section{Problem formulation} \vspace{-0.3em}
\label{sec:formulation}
Consider a hash function $r(\bfx)$ that maps an input data $\bfx \in \mathcal{X}$ onto a $d$ dimensional binary compound hash code $\bfh \in \{0,1\}^d$ with the constraint that $k$ out of $d$ total bits needs to be set. We parameterize the mapping as \vspace{-1.5em}

\small
\begin{align}
&r(\bfx)=\argmin_{\bfh \in \{0,1\}^d} -f(\bfx; \bftheta)^\intercal \bfh \nonumber\\
&\text{\ \ subject to } ||\bfh||_1 = k,
\label{eqn:hash_function} \vspace{-1em}
\end{align}
\normalsize
where $f(\cdot, \bftheta) :  \mathcal{X} \rightarrow \mathbb{R}^d$ is a transformation (i.e. neural network) differentiable with respect to the parameter $\bftheta$ and takes the input $\bfx$ and emits the $d$ dimensional embedding vector. Given the hash function $r(\cdot)$, we define a hash table $H$ which is composed of $d$ buckets with each bucket indexed by a compound hash code $\bfh$. Then, given a query $\bfx_q$, union of all the items in the buckets indexed by $k$ active bits in $r(\bfx_q)$ is retrieved as the candidates of the approximate nearest items of $\bfx_q$. Finally, this is followed by a reranking operation where the retrieved items are ranked according to the distances computed using the original embedding representation $f(\cdot; \bftheta)$.

Note, in quantization based hashing \cite{survey_learningtohash, cao2016deep}, a set of prototypes or cluster centroids are first computed via dictionary learning or other clustering (i.e. k-means) algorithms. Then, the function $f(\bfx; \bftheta)$ is represented by the indices of k-nearest prototypes or centroids. Concretely, if we replace $f(\bfx; \bftheta)$ in \Cref{eqn:hash_function} with the negative distances of the input item $\bfx$ with respect to all $d$ prototypes or centroids, $[-||\bfx - \bfc_1||_2, \ldots, -||\bfx - \bfc_d||_2]^\intercal$, then the corresponding hash function $r(\bfx)$ can be used to build the hash table. 

In contrast to most of the recent methods that learn a hamming ranking in a neural network and perform exhaustive linear search over the entire dataset \cite{xia2014supervised, zhao2015deep, hammingmetric, li2017deep}, quantization based methods, have guaranteed search inference speed up by only considering a subset of $k$ out $d$ buckets and thus avoid exhaustive linear search. We explicitly maintain the sparsity constraint on the hash code in \Cref{eqn:hash_function} throughout our optimization without continuous relaxations to inherit the efficiency aspect of quantization based hashing and this is one of the key attributes of the algorithm.

Although quantization based hashing is known to show high search accuracy and search efficiency \cite{survey_learningtohash}, running the quantization procedure on the entire dataset to compute the cluster centroids is computationally very costly and requires storing all of the cluster centroids in the memory. Our desiderata are to formulate an efficient end-to-end learning method for quantizable representations which (1) guarantees the search efficiency by avoiding linear search over the entire data, (2) can be efficiently trained in a mini-batch stochastic gradient descent setting and  avoid having to quantize the entire dataset or having to store the cluster centroids for all classes in the memory, and (3) offers the modularity to accommodate existing embedding representation learning methods which are known to show the state of the art performance on retrieval and clustering tasks. \vspace{-0.5em}
\section{Methods} \vspace{-0.3em}
\label{sec:methods}
We formalize our proposed method in \Cref{sec:method_main} and discuss the subproblems in \Cref{sec:method_hash} and in \Cref{sec:method_embedding}.\vspace{-0.3em}
\subsection{\mbox{End-to-end learning for quantizable representations}}
\label{sec:method_main} \vspace{-0.5em}
Finding the optimal set of embedding representations and the corresponding hash codes is a chicken and egg problem. Embedding representations are required to infer which $k$ activation dimensions to set in the corresponding binary hash code, but the binary hash codes are needed to adjust the embedding representations indexed at the activated bits so that similar items get hashed to the same buckets and vice versa. We formalize this notion in \Cref{eqn:master_eqn} below. \vspace{-1.5em}

\small
\begin{align}
&\minimize_{\substack{\boldsymbol\theta\\ \mathbf{h}_1,\ldots,\mathbf{h}_n}}  ~\underbrace{\text{\large $\ell$}_{\text{metric}} (\{f(\mathbf{x}_i; \boldsymbol{\theta})\}_{i=1}^n; \mathbf{h}_1,\ldots,\mathbf{h}_n)}_{\text{embedding representation quality }} ~+ \nonumber\\
&~~~~~~~~~~~~\gamma \underbrace{\left( \sum_i^n -f(\mathbf{x}_i; \boldsymbol\theta)^\intercal \mathbf{h}_i + \sum_{i}^n\sum_{j : y_j \neq y_i} \mathbf{h}_i^\intercal P \mathbf{h}_j \right)}_{\text{hash code performance }} \nonumber\\
&\text{\ \ subject to }~ \mathbf{h}_i \in \{0,1\}^d, || \mathbf{h}_i ||_1 = k, ~ \forall i,
\label{eqn:master_eqn} 
\end{align}
\normalsize
where the matrix $P$ encodes the pairwise cost for the hash code similarity between each negative pair and $\gamma$ is the trade-off hyperparameter balancing the loss contribution between the embedding representation quality given the hash codes and the performance of the hash code with respect to the embedding representations. We solve this optimization problem via alternating minimization through iterating over solving for $k$-sparse binary hash codes $\mathbf{h}_1,\ldots,\mathbf{h}_n$ and updating the parameters of the deep network $\boldsymbol\theta$ for the continuous embedding representations per each mini-batch. Following subsections discuss these two steps in detail. \vspace{-0.5em}
\subsection{Learning the compound hash code} \vspace{-0.5em}
\label{sec:method_hash}
Given a set of continuous embedding representations $\{f(\mathbf{x}_i; \boldsymbol\theta)\}_{i=1}^n$, we seek to solve the following subproblem in \Cref{eqn:k_energy} where the task is to (\emph{unary}) select $k$ as large elements of the each embedding vector as possible, while (\emph{pairwise}) selecting as orthogonal elements as possible across different classes. The unary term mimics the hash function $r(\bfx)$ in \Cref{eqn:hash_function} and the pairwise term has the added benefit that it also provides robustness to the optimization especially during the early stages of the training when the embedding representation is not very accurate. \vspace{-3em}

\small
\begin{align}
&\minimize_{\mathbf{h}_1,\ldots,\mathbf{h}_n}~ \underbrace{\sum_{i}^n -f(\mathbf{x}_i; \boldsymbol\theta)^\intercal \mathbf{h}_i + \sum_i^n \sum_{j : y_j \neq y_i} \mathbf{h}_i^\intercal P \mathbf{h}_j}_{\coloneqq~ g(\bfh_{1, \ldots, n}; \bftheta)} \nonumber\\
&\text{\ \  subject to }~~ \mathbf{h}_i \in \{0,1\}^d, || \mathbf{h}_i ||_1 = k, ~\forall i,
\label{eqn:k_energy} 
\end{align}
\normalsize
However, solving for the optimal solution of the problem in \Cref{eqn:k_energy} is NP-hard in general even for the simple case where $k=1$ and $d>2$ \cite{boykov_fast}. Thus, we construct a upper bound function $\bar{g}(\bfh_{1, \ldots, n}; \bftheta)$ to the objective function $g(\bfh_{1, \ldots, n}; \bftheta)$ which we argue that it can be exactly optimized by establishing the connection to a network flow algorithm. The upper bound function is a slightly reparameterized discrete objective where we optimize the hash codes over the average embedding vectors per each class instead. We first rewrite\footnote{We also omit the dependence of the index $i$ for each $\bfh_k$ and $\bfh_l$ to avoid the notation clutter.} the objective function by indexing over each class and then over each data per class and derive the upper bound function as shown below. \vspace{-1.5em}

\small
\begin{align}
&g(\bfh_{1,\ldots,n}; \bftheta) = \sum_{i}^{n_c}\sum_{k:y_k=i}-f(\mathbf{x}_k; \boldsymbol\theta)^\intercal \mathbf{h}_k + \sum_{i}^{n_c} \sum_{\substack{k:y_k=i,\\l:y_l \neq i}} \mathbf{h}_k^\intercal P \mathbf{h}_l \nonumber\\
&\leq~ \sum_{i}^{n_c}\sum_{k:y_k=i}-\mathbf{c}_i^\intercal \mathbf{h}_k + \sum_{i}^{n_c} \sum_{\substack{k:y_k=i,\\l:y_l \neq i}} \mathbf{h}_k^\intercal P \mathbf{h}_l \nonumber\\
&~~~~~~~~+ \underbrace{\maximize_{\substack{\bfh_1, \ldots, \bfh_n\\\mathbf{h}_i \in \{0,1\}^d, || \mathbf{h}_i ||_1 = k}} \sum_{i=1}^{n_c} \sum_{k:y_k=i}\left(\mathbf{c}_i-f(\mathbf{x}_k; \boldsymbol \theta\right))^\intercal\mathbf{h}_k}_{:= M(\bftheta)} \nonumber\\
& =~ \bar{g}(\bfh_{1, \ldots, n}; \bftheta)
\label{eqn:upperbound} 
\end{align}
\normalsize
where $n_c$ denotes the number of classes in the mini-batch, $m= |\{k:y_k=i\}|$, and $ \mathbf{c}_i =\frac{1}{m}\sum_{k:y_k=i} f(\bfx_k; \bftheta)$. Here, w.l.o.g we assume each class has $m$ number of data in the mini-batch (i.e. Npairs \cite{npairs} mini-batch construction). The last term in upper bound, denoted as $M(\bftheta)$, is constant with respect to the hash codes and is non-negative. Note, from the bound in \Cref{eqn:upperbound}, the gap between the minimum value of $g$ and the minimum value of $\bar{g}$ is bounded above by $M(\bftheta)$. Furthermore,  since this value corresponds to the maximum deviation of an embedding vector from its class mean of the embedding, the bound gap decreases over iterations as we update the network parameter $\bftheta$ to attract similar pairs of data and vice versa for dissimilar pairs in the other embedding subproblem (more details in \Cref{sec:method_embedding}).  

Moreover, minimizing the upper bound over each hash codes $\{\bfh_i\}_{i=1}^n$ is equivalent to minimizing a reparameterization $\hat{g}(\bfz_{1, \ldots, n_c}; \bftheta)$ over $\{\bfz_i\}_{i=1}^{n_c}$ defined below because for a given class label $i$, each $\bfh_k$ shares the same $\bfc_i$ vector. \vspace{-1.5em}

\small
\begin{align}
&\minimize_{\substack{\bfh_1, \ldots, \bfh_n\\\mathbf{h}_i \in \{0,1\}^d, || \mathbf{h}_i ||_1 = k}} ~\sum_{i}^{n_c}\sum_{k:y_k=i}-\mathbf{c}_i^\intercal \mathbf{h}_k+\sum_{i}^{n_c} \sum_{\substack{k:y_k=i,\\l:y_l \neq i}} \mathbf{h}_k^\intercal P \mathbf{h}_l \nonumber\\
&=\minimize_{\substack{\bfz_1, \ldots, \bfz_{n_c}\\\mathbf{z}_i \in \{0,1\}^d, || \mathbf{z}_i ||_1 = k}} ~m \underbrace{\left(\sum_{i}^{n_c}-\mathbf{c}_i^\intercal \mathbf{z}_i+\sum_{i}^{n_c}\sum_{j\neq i} \mathbf{z}_i^\intercal P' \mathbf{z}_j \right)}_{\coloneqq~ \hat{g}(\bfz_{1, \ldots, n_c}; \bftheta)}, \nonumber 
\end{align}
\normalsize
where $P' = m P$. Therefore, we formulate the following optimization problem below whose objective upper bounds the original objective in \Cref{eqn:k_energy} over all feasible hash codes $\{\bfh_i\}_{i=1}^n$. \vspace{-1.5em}

\small
\begin{align}
&\minimize_{\mathbf{z}_1,\ldots,\mathbf{z}_{n_c}}~ \sum_{i}^{n_c} -\mathbf{c}_i^\intercal \mathbf{z}_i +  \sum_{i, j\neq i} \mathbf{z}_i^\intercal P \mathbf{z}_j \nonumber\\
&\text{\ \  subject to }~~ \mathbf{z}_i \in \{0,1\}^d, || \mathbf{z}_i ||_1 = k, ~\forall i
\label{eqn:k_energy_avg}
\end{align}
\normalsize
In the upper bound problem above, we consider the case where the pairwise cost matrix $P$ is a diagonal matrix of non-negative values \footnote{Note that we absorb the scaler factor $m$ from the definition of $P'$ and redefine $P = \diag(\lambda_1, \ldots, \lambda_d)$.}. \Cref{thm:equivalence} in the next subsection proves that finding the optimal solution of \Cref{eqn:k_energy_avg} is equivalent to finding the minimum cost flow solution of the flow network $G'$ illustrated in \Cref{fig:mcf} which can be solved efficiently and exactly in polynomial time. In practice, we use the efficient implementations from OR-Tools (Google Optimization Tools for combinatorial optimization problems) \cite{ortools} to solve the minimum cost flow problem per each mini-batch.  \vspace{-0.1em}
\subsection{Equivalence of problem \ref{eqn:k_energy_avg} to minimum cost flow}  
\begin{thm}\label{thm:equivalence}
The optimization problem in \Cref{eqn:k_energy_avg} can be solved by finding the minimum cost flow solution on the flow network G'.
\end{thm}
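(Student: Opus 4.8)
The plan is to exhibit an explicit flow network $G'$ whose feasible integral flows are in cost-preserving bijection with the feasible $k$-sparse assignments $\{\bfz_i\}$, and then to invoke the integrality of minimum cost flow. The first step I would carry out is to simplify the pairwise penalty using the diagonal structure $P = \diag(\lambda_1, \ldots, \lambda_d)$ with $\lambda_t \geq 0$. Writing $n_t = \sum_{i=1}^{n_c} z_{i,t}$ for the number of classes that activate dimension $t$ and using $z_{i,t}^2 = z_{i,t}$, one obtains
\begin{align}
\sum_{i}\sum_{j \neq i} \bfz_i^\intercal P \bfz_j = \sum_{t=1}^d \lambda_t \sum_{i}\sum_{j \neq i} z_{i,t} z_{j,t} = \sum_{t=1}^d \lambda_t \, n_t (n_t - 1). \nonumber
\end{align}
The crucial observation is that this is a sum of per-dimension terms, each a convex function of the integer count $n_t$; its discrete marginal increments $\lambda_t\bigl[\,j(j-1) - (j-1)(j-2)\,\bigr] = 2\lambda_t(j-1)$ are non-negative and non-decreasing in $j$. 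Together with the unary term $\sum_i\sum_t -c_{i,t}z_{i,t}$, the objective of \Cref{eqn:k_energy_avg} becomes a \emph{linear} cost over the activation variables plus a \emph{separable convex} cost over the dimension occupancies, which is exactly the structure that admits a minimum cost flow encoding.

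Next I would build $G'$ with a source $s$, one node $u_i$ per class, one node $v_t$ per dimension, and a sink $\tau$. I would place (i) an edge $s \to u_i$ of capacity $k$ and cost $0$, so that flow conservation forces each class to route exactly $k$ units downstream, realizing the constraint $\|\bfz_i\|_1 = k$; (ii) an edge $u_i \to v_t$ of capacity $1$ and cost $-c_{i,t}$, whose use encodes $z_{i,t}=1$ and contributes the unary cost; and (iii) for each dimension $t$, a bundle of $n_c$ parallel edges $v_t \to \tau$, each of capacity $1$, with costs $0, 2\lambda_t, 4\lambda_t, \ldots, 2\lambda_t(n_c-1)$ that reproduce the convex marginal increments computed above. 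Because these marginal costs are sorted in increasing order, any cost-minimizing flow fills them cheapest-first, so when $n_t$ units pass through $v_t$ the incurred cost is $\sum_{j=1}^{n_t} 2\lambda_t(j-1) = \lambda_t n_t(n_t-1)$, matching the pairwise penalty term for term.

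I would then verify the bijection in both directions: every feasible $\{\bfz_i\}$ induces a feasible integral $s$--$\tau$ flow of value $n_c k$ of equal cost, and conversely every such flow, being integral edge-by-edge because all capacities are integral, decodes via the $u_i \to v_t$ edges into a feasible $k$-sparse assignment of equal cost. Since $G'$ is acyclic from $s$ to $\tau$ there are no negative cost cycles despite the negative unary costs, so the minimum cost flow problem is well posed, and the integrality theorem guarantees an integral optimizer. Hence minimizing over flows and minimizing $\hat{g}$ over $\{\bfz_i\}$ attain the same value at corresponding points, which proves the equivalence. The step I expect to be the main obstacle is (iii): one must argue that the greedy cheapest-first behavior of min cost flow on the parallel-edge bundles correctly realizes the convex pairwise cost at \emph{every} occupancy level simultaneously, and this hinges entirely on the non-negativity and diagonality of $P$, since without convexity the edge-splitting gadget would no longer compute the intended cost.
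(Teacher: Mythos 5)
Your construction coincides with the paper's own network $G'$ (source with capacity-$k$, cost-$0$ edges into class nodes; unit-capacity edges of cost $-c_{i,t}$ into dimension nodes; $n_c$ parallel unit-capacity edges into the sink with costs $0, 2\lambda_t, \ldots, 2\lambda_t(n_c-1)$), and your argument mirrors the paper's two lemmas exactly --- the cost accounting $\sum_{j=1}^{n_t} 2\lambda_t(j-1) = \lambda_t n_t(n_t-1)$ for the flow induced by an assignment, and the cheapest-first filling of the parallel edges forced by optimality --- so this is essentially the same proof, with integrality made slightly more explicit than in the paper. One small caution: your claim that \emph{every} feasible integral flow decodes to an assignment of \emph{equal} cost is too strong (a suboptimal flow can route through expensive parallel edges while skipping cheap ones, so its cost only upper-bounds the decoded assignment's objective), but since you separately establish cheapest-first filling for the minimizer, the equality holds where the argument needs it, just as in the paper.
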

\begin{proof}
Suppose we construct a complete bipartite graph $G=(A \cup B, E)$ and create a directed graph $G'=(A \cup B \cup \{s, t\}, E')$ from $G$ by adding source $s$ and sink $t$ and directing all edges $E$ in $G$ from $A$ to $B$. We also add edges from $s$ to each vertex $a_p \in A$. For each vertex $b_q \in B$, we add $n_c$ number of edges to $t$. Edges incident to $s$ have capacity $u(s, a_p) = k$ and cost $v(s, a_p)=0$. The edges between $a_p \in A$ and $b_q \in B$ have capacity $u(a_p, b_q) =1$ and cost $v(a_p, b_q) = -\bfc_p[q]$. Each edge $r\in\{0,\ldots,n_c-1\}$ from $b_q \in B$ to $t$ has capacity $u((b_q, t)_r) = 1$ and cost $v((b_q, t)_r) = 2 \lambda_q r$. \Cref{fig:mcf} illustrates the flow network $G'$. The amount of flow to be sent from $s$ to $t$ is $n_ck$. 

Then we define the flow $\{f_z(e)\}_{e \in E'}$, indexed both by (1) a given configuration of $\bfz_1, \ldots, \bfz_{n_c}$ where each $\bfz_i \in \{0,1\}^d, ||\bfz_i||_1=k, \forall i $, and by (2) the edges of $G'$, below: 
\begin{align}
&(i)~ f_z(s, a_p) = k,~ (ii)~ f_z(a_p, b_q) = \bfz_p[q],\nonumber \\
&(iii)~ f_z((b_q, t)_r) = \begin{cases} 1&\textrm{for  } r< \sum_{p=1}^{n_c} \bfz_p[q]\\
0&\textrm{otherwise}\end{cases}
\label{eqn:flow_def}
\end{align}\vspace{-1em}

We first show the flow $f_z$ defined above is feasible for $G'$. The capacity constraints are satisfied by construction in \Cref{eqn:flow_def}, so we only need to check the flow conservation conditions. First, the amount of input flow at $s$ is $n_c k$ and the output flow from $s$ is $\sum_{a_p \in A} f_z(s, a_p) = \sum_{a_p \in A} k = n_c k$ which is equal. The amount of input flow to each vertex $a_p \in A$ is given as $k$ and the output flow is $\sum_{b_q \in B} f_z(a_p, b_q) = \sum_{q}^d \bfz_p[q] = ||\bfz_p||_1 = k$. \\

Let us denote the amount of input flow at a vertex $b_q \in B$ as $y_q = \sum_{p}^{n_c} \bfz_p[q]$. The output flow from the vertex $b_q$ is $\sum_{r=0}^{n_c - 1} f_z((b_q, t)_r) = \sum_{r=0}^{y_q-1} f_z((b_q, t)_r) + \sum_{r=y_q}^{n_c -1} f_z((b_q, t)_r) = y_q$ from \Cref{eqn:flow_def} ($iii$). The last condition to check is that the amount of input flow at $t$ is equal to the output flow at $s$. $\sum_{b_q \in B} \sum_{r=0}^{n_c - 1} f_z((b_q, t)_r) = \sum_{q=1}^d y_q = \sum_{q,p} \bfz_p[q] = n_c k$. This shows the construction of the flow $\{f_z(e)\}_{e \in E'}$ in \Cref{eqn:flow_def} is valid in $G'$.\\

Now denote $\{f_o(e)\}_{e \in E'}$ as the minimum cost flow solution of the flow network $G'$ which minimizes the total cost $\sum_{e \in E'} v(e) f_o(e)$. Denote the optimal flow from a vertex $a_p \in A$ to a vertex $b_q \in B$ as $\bfz_p'[q] := f_o(a_p, b_q)$. By the optimality of the flow $\{f_o(e)\}_{e \in E'}$, we have that $\sum_{e \in E'} v(e) f_o(e) \leq \sum_{e \in E'} v(e) f_z(e)$. By \Cref{lemma:fo}, the $lhs$ of the inequality is equal to $\sum_p -\bfc_p^\intercal \bfz'_p + \sum_{p_1\neq p_2} \bfz_{p_1}'^\intercal P \bfz'_{p_2}$. Also, by \Cref{lemma:fz}, the $rhs$ is equal to $\sum_p -\bfc_p^\intercal \bfz_p + \sum_{p_1\neq p_2} \bfz_{p_1}^\intercal P \bfz_{p_2}$.\\

Finally, we have that $\sum_p -\bfc_p^\intercal \bfz'_p + \sum_{p_1\neq p_2} \bfz_{p_1}'^\intercal P \bfz'_{p_2} \leq \sum_p -\bfc_p^\intercal \bfz_p + \sum_{p_1\neq p_2} \bfz_{p_1}^\intercal P \bfz_{p_2}, \forall \{\bfz_p\}$. Thus, we have proved that finding the minimum cost flow solution on the flow network $G'$ and translating the flows between each vertices between $A$ and $B$ as $\{\bfz_p'\}$, we can find the optimal solution to the optimization problem in \Cref{eqn:k_energy_avg}. 
\end{proof}

\begin{lemma}\label{lemma:fo}
For the minimum cost flow $\{f_o(e)\}_{e \in E'}$ of the network $G'$, we have that the total cost is $\sum_{e \in E'} v(e) f_o(e) = \sum_p -\bfc_p^\intercal \bfz'_p + \sum_{p_1\neq p_2} \bfz_{p_1}'^\intercal P \bfz'_{p_2}$. 
\end{lemma}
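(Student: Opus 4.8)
The plan is to split the total cost $\sum_{e \in E'} v(e) f_o(e)$ according to the three edge types of $G'$ and to read off each contribution in terms of the induced flows $\bfz_p'[q] = f_o(a_p, b_q)$. The source edges $(s, a_p)$ carry cost $v(s,a_p) = 0$ and contribute nothing. The bipartite edges $(a_p, b_q)$ carry flow $\bfz_p'[q]$ at cost $-\bfc_p[q]$, so their aggregate contribution is $\sum_{p,q} -\bfc_p[q]\,\bfz_p'[q] = \sum_p -\bfc_p^\intercal \bfz_p'$, which is exactly the unary term. It then remains to show that the parallel edges $(b_q, t)_r$ account for precisely the pairwise term $\sum_{p_1 \neq p_2} \bfz_{p_1}'^\intercal P \bfz_{p_2}'$.

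To handle the parallel edges, I would first apply flow conservation at each $b_q$: writing $y_q := \sum_p \bfz_p'[q]$ for the inflow, exactly $y_q$ of the $n_c$ unit-capacity edges $(b_q,t)_r$ must be saturated. The key step is an exchange argument establishing \emph{which} of these edges the optimal flow uses. Since $\lambda_q \ge 0$, the costs $v((b_q,t)_r) = 2\lambda_q r$ are nondecreasing in $r$; hence if some cheaper edge $r_1$ were idle while a more expensive edge $r_2 > r_1$ carried flow, rerouting one unit from $r_2$ to $r_1$ would strictly reduce the total cost without violating any capacity or conservation constraint, contradicting the optimality of $f_o$. Therefore the saturated edges are exactly $r \in \{0, \ldots, y_q - 1\}$, and their cost is $\sum_{r=0}^{y_q - 1} 2\lambda_q r = \lambda_q\, y_q(y_q - 1)$.

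Finally I would reconcile this with the pairwise term. By the integrality of minimum cost flow (all capacities are integral, so an integral minimizer exists) together with the unit capacity on each bipartite edge, every $\bfz_p'[q] \in \{0,1\}$, whence $\bfz_p'[q]^2 = \bfz_p'[q]$ and $\sum_p \bfz_p'[q]^2 = y_q$. Expanding the diagonal form then gives $\sum_{p_1 \neq p_2} \bfz_{p_1}'^\intercal P \bfz_{p_2}' = \sum_q \lambda_q \big( (\sum_p \bfz_p'[q])^2 - \sum_p \bfz_p'[q]^2 \big) = \sum_q \lambda_q\, y_q(y_q - 1)$, matching the parallel-edge cost computed above. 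Summing the zero, unary, and pairwise contributions yields the claimed identity.

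I expect the main obstacle to be the exchange argument for the $(b_q, t)_r$ edges: one must argue carefully that an integral optimal flow saturates the cheapest $y_q$ parallel edges, which genuinely needs both the nonnegativity of the $\lambda_q$ (so that the per-edge costs are monotone in $r$) and the existence of an integral optimal flow. The remaining manipulations are routine bookkeeping once the binary nature of the $\bfz_p'[q]$ is in hand.
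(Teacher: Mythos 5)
Your proof is correct and follows essentially the same route as the paper's: the same decomposition of the total cost by edge type, the same observation that optimality forces the flow to saturate the cheapest parallel edges $(b_q,t)_r$ (yielding $\sum_{r=0}^{y_q-1} 2\lambda_q r = \lambda_q y_q(y_q-1)$), and the same algebraic identification with the pairwise term. If anything you are more careful than the paper, which leaves both the exchange argument and the integrality of the optimal flow (needed for $\bfz_p'[q]^2 = \bfz_p'[q]$ in the final reconciliation) implicit.
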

\begin{proof}
The total minimum cost $\sum_{e \in E'} v(e) f_o(e)$ is broken down as \vspace{-2em}

\small
\begin{align}
&\sum_{e \in E'} v(e) f_o(e) = \underbrace{\sum_{a_p \in A} v(s, a_p) f_o(s, a_p)}_{\text{flow from } s \text{ to } A} ~+ \nonumber\\
&\underbrace{\sum_{a_p \in A} \sum_{b_q \in B} v(a_p, b_q) f_o(a_p, b_q)}_{\text{flow from } A \text{ to } B} + \underbrace{\sum_{b_q \in B} \sum_{r=0}^{n_c-1} v((b_q, t)_r) f_o((b_q, t)_r)}_{\text{flow from } B \text{ to } t} \nonumber
\end{align}
\normalsize
Denote the amount of input flow at a vertex $b_q$ given the minimum cost flow $\{f_o(e)\}_{e \in E'}$ as $y_q' = \sum_p f_o(a_p, b_q) = \sum_{p}^{n_c} \bfz_p'[q]$. From the cost definition at the edges between $b_q$ and $t$, $v((b_q, t)_r) = 2 \lambda_q r$, and by the optimality of the minimum cost flow, we have that $f_o((b_q, t)_r) = 1 ~\forall r < y_q'$ and $f_o((b_q, t)_r) = 0 ~\forall r \geq y_q'$. Thus, the total cost is \vspace{-1em}
\small
\begin{align}
\sum_{e \in E'} v(e) f_o(e) &= 0 + \sum_{p}^{n_c} \sum_{q}^d -\bfc_p[q] \bfz_p'[q] + \sum_{b_q \in B}\sum_{r=0}^{y_q'-1} 2 \lambda_q r \nonumber\\
&= \sum_{p} -\bfc_p^\intercal \bfz_p' + \sum_{q} \lambda_q y_q'(y_q' - 1)\nonumber\\
&= \sum_{p} -\bfc_p^\intercal \bfz_p' + \sum_{q} \lambda_q {y_q'}^2-\sum_p \sum_q \lambda_q \bfz_{p}'[q]  \nonumber\\
&= \sum_{p} -\bfc_p^\intercal \bfz_p' + {\sum_{p} \bfz_{p}'}^\intercal P {\sum_{p} \bfz_{p}'} - \sum_ p \bfz_{p}'^\intercal P \bfz_{p}' \nonumber\\
&= \sum_{p} -\bfc_p^\intercal \bfz_p' + \sum_{p_1 \neq p_2} \bfz_{p_1}'^\intercal P \bfz_{p_2}' 
\end{align} 
\end{proof} 
\normalsize
\begin{lemma}\label{lemma:fz}
For the $\{f_z(e)\}_{e \in E'}$ defined as \Cref{eqn:flow_def} of the network $G'$, we have that the total cost is $\sum_{e \in E'} v(e) f_z(e) = \sum_p -\bfc_p^\intercal \bfz_p + \sum_{p_1\neq p_2} \bfz_{p_1}^\intercal P \bfz_{p_2}$. 
\end{lemma}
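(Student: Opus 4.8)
The plan is to evaluate the total cost of the explicit feasible flow $f_z$ directly from its definition in \Cref{eqn:flow_def}, mirroring the bookkeeping already carried out for the optimal flow in \Cref{lemma:fo}. The essential difference is that here \emph{no} optimality argument is needed: the flow on the $B$-to-$t$ edges is prescribed by construction, so the cost incurred on those edges can be read off immediately rather than deduced from a saturation-of-cheapest-edges argument.

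First I would split the total cost $\sum_{e \in E'} v(e) f_z(e)$ by edge type into the three groups $s \to A$, $A \to B$, and $B \to t$, exactly as in the opening display of \Cref{lemma:fo}. The $s \to A$ contribution vanishes since $v(s, a_p) = 0$ for every $p$. For the $A \to B$ edges, substituting $f_z(a_p, b_q) = \bfz_p[q]$ and $v(a_p, b_q) = -\bfc_p[q]$ gives $\sum_p \sum_q -\bfc_p[q]\,\bfz_p[q] = \sum_p -\bfc_p^\intercal \bfz_p$, which already matches the first term of the claimed expression.

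The remaining work is the $B \to t$ contribution. Setting $y_q = \sum_p^{n_c} \bfz_p[q]$, part $(iii)$ of \Cref{eqn:flow_def} places unit flow exactly on the edges indexed by $r = 0, \ldots, y_q - 1$ (these happen to be the cheapest incident edges since $v((b_q,t)_r) = 2\lambda_q r$ is increasing in $r$, so this portion coincides verbatim with the optimal-flow case of \Cref{lemma:fo}). Hence the cost is $\sum_q \sum_{r=0}^{y_q - 1} 2\lambda_q r = \sum_q \lambda_q y_q(y_q - 1)$, using $\sum_{r=0}^{y_q-1} r = y_q(y_q-1)/2$.

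Finally I would reproduce the algebraic simplification of \Cref{lemma:fo}: writing $\lambda_q y_q(y_q-1) = \lambda_q y_q^2 - \lambda_q y_q$, using $P = \diag(\lambda_1, \ldots, \lambda_d)$ to identify $\sum_q \lambda_q y_q^2 = \left(\sum_p \bfz_p\right)^\intercal P \left(\sum_p \bfz_p\right)$, and using the binary constraint $\bfz_p[q]^2 = \bfz_p[q]$ to identify $\sum_q \lambda_q y_q = \sum_p \bfz_p^\intercal P \bfz_p$, so that the $B \to t$ cost collapses to $\sum_{p_1 \neq p_2} \bfz_{p_1}^\intercal P \bfz_{p_2}$. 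Adding the $A \to B$ term yields the claim. There is essentially no obstacle: the only points requiring care are that the diagonal structure of $P$ is what lets the index-$q$ sum factor as a quadratic form in the aggregate vector $\sum_p \bfz_p$, and that the binary constraint cancels the diagonal self-interaction. The lemma is in the end just the statement that the constructed flow $f_z$ realizes the same cost formula as the optimal flow $f_o$ once its flow pattern is substituted, so the content lies entirely in the substitution, not in any new inequality.
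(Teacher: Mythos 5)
Your proof is correct and takes essentially the same route as the paper: the paper also proves \Cref{lemma:fz} by reusing the cost decomposition of \Cref{lemma:fo}, with the sole change that the unit flows on the $B$-to-$t$ edges indexed $r=0,\ldots,y_q-1$ are read off directly from \Cref{eqn:flow_def}~($iii$) rather than deduced from optimality. Your subsequent algebra --- the factorization $\sum_q \lambda_q y_q^2 = \bigl(\sum_p \bfz_p\bigr)^\intercal P \bigl(\sum_p \bfz_p\bigr)$ via the diagonal structure of $P$ and the cancellation of the diagonal term via $\bfz_p[q]^2 = \bfz_p[q]$ --- is exactly the simplification the paper's proof of \Cref{lemma:fo} carries out and implicitly invokes here.
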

\begin{proof}
The proof is similar to \Cref{lemma:fo} except that we use the definition of the flow $\{f_z(e)\}_{e \in E'}$ in \Cref{eqn:flow_def} ($iii$) to reduce the cost of the flow from $B$ to $t$ to $\sum_{r=0}^{y_q' - 1} 2 \lambda_q r$.
\end{proof} \vspace{-1em}
\textbf{Time complexity} ~For $\lambda_q \ll n_c$, note that the worst case time complexity of finding the minimum cost flow (MCF) solution in the network $G'$ is $\mathcal{O}\left(\left(n_c+d\right)^2 n_c d \log{\left(n_c+d\right)}\right)$ \cite{goldberg1990finding}. In practice, however, it has been shown that implementation heuristics such as \emph{price updates}, \emph{price refinement}, \emph{push-look-ahead}, \cite{goldberg1997efficient} and \emph{set-relabel} \cite{bunnagel1998efficient} methods drastically improve the real-life performance. Also, we emphasize again that we solve the minimum cost flow problem only \emph{within the mini-batch} not on the entire dataset. We benchmarked the wall clock running time of the method at varying sizes of $n_c$ and $d$ and observed approximately linear time complexity in $n_c$ and $d$. \Cref{fig:mcf_runtime} shows the benchmark wall clock run time results.  \vspace{-0.7em}

\begin{figure}[ht]
\centering
\begin{tikzpicture}
\begin{axis}[
	legend pos={north west},
	xtick={64,128,256,512},
	xlabel={$d$},
	ylabel={Average wall clock run time (sec)},
	grid=major
]
\addplot coordinates {
	(64, 0.0235) (128, 0.0460) (256, 0.1017) (512, 0.1962)
};
\addplot coordinates {
	(64, 0.0473) (128, 0.0941) (256, 0.1903) (512, 0.3846)
};
\addplot coordinates {
	(64, 0.0975) (128, 0.1939) (256, 0.3917) (512, 0.7822)
};
\addplot coordinates {
	(64, 0.2026) (128, 0.4026) (256, 0.7958) (512, 1.6129)
};

\legend{$n_c=64$,$n_c=128$,$n_c=256$,$n_c=512$}
\end{axis}
\end{tikzpicture} \vspace{-2.5em}
\caption{Average wall clock run time of computing minimum cost flow on $G'$ per mini-batch using \cite{ortools}. In practice, the run time is approximately linear in $n_c$ and $d$. Each data point is averaged over 20 runs on machines with Intel Xeon E5-2650 CPU.}
\label{fig:mcf_runtime} \vspace{-2em}
\end{figure}
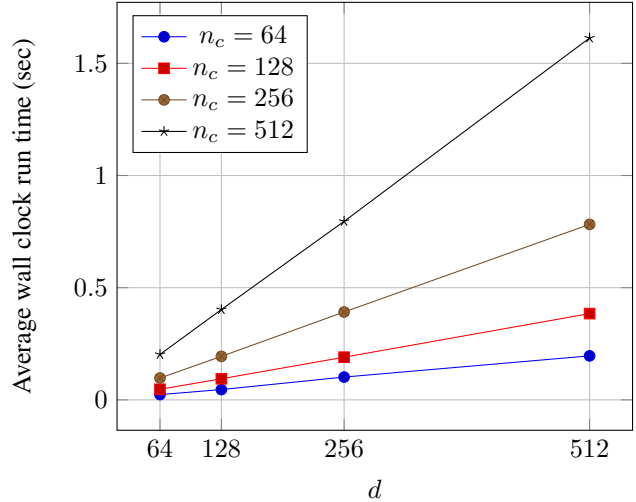

\makeatletter
\newcommand{\gettikzxy}[3]{
  \tikz@scan@one@point\pgfutil@firstofone#1\relax
  \edef#2{\the\pgf@x}
  \edef#3{\the\pgf@y}
}
\makeatother
\begin{figure}[ht]
\centering
\begin{tikzpicture}[every node/.style={scale=0.8},font=\tt]
	\SetUpEdge[lw=0.75pt,color=red,labelcolor=white]
	\GraphInit[vstyle=normal] 
	\SetGraphUnit{2}
	\SetVertexMath
	\tikzset{VertexStyle/.append  style={fill=red!50,minimum size=0.9cm}}
	\Vertex{s}
	\tikzset{VertexStyle/.append  style={fill=white}}
	\NOEA(s){a_1}
	\SO[unit=1.5](a_1){a_p}
	\path (a_1) -- (a_p) node [font=\huge, midway, sloped] {$\dots$};
	\SO[unit=1.9](a_p){a_{n_c}}	
	\path (a_p) -- (a_{n_c}) node [font=\huge, midway, sloped] {$\dots$};
	\SetUpEdge
	\tikzset{EdgeStyle/.style={-stealth}}
	\Edge(s)(a_1)	
	\Edge[label={k,0},style={sloped}](s)(a_p)
	\Edge(s)(a_{n_c})
	\NOEA[unit=2.8](a_p){b_1}
	\SO[unit=2.8](b_1){b_q}
	\path (b_1) -- (b_q) node [font=\huge, midway, sloped] {$\dots\dots$};
	\SO[unit=2.5](b_q){b_d}
	\path (b_q) -- (b_d) node [font=\huge, midway, sloped] {$\dots\dots$};
	\Edge(a_1)(b_1) \Edge(a_1)(b_q) \Edge(a_1)(b_d)
	\Edge(a_{n_c})(b_1) \Edge(a_{n_c})(b_q) \Edge(a_{n_c})(b_d)
	\Edge[label={1,$-c_p[1]$},style={sloped}](a_p)(b_1) \Edge[label={1,$-c_p[q]$},style={sloped}](a_p)(b_q) \Edge[label={1,$-c_p[d]$},style={sloped}](a_p)(b_d)
	\tikzset{VertexStyle/.append  style={fill=blue!50,minimum size=0.9cm}}
	\EA[unit=7.9](s){t}
	\Edge[style={bend left=70}](b_1)(t) \Edge[style={bend left=37}](b_1)(t) \Edge[style={bend left=30}](b_1)(t)
    \draw[opacity=0,line width=0cm] (b_1) to [bend left=70]  coordinate[pos=0.3] (mid_b_1_t_1)(t);
    \gettikzxy{(mid_b_1_t_1)}{\oneax}{\oneay}
    \draw[opacity=0,line width=0cm] (b_1) to [bend left=37]  coordinate[pos=0.3] (mid_b_1_t_2)(t);
    \gettikzxy{(mid_b_1_t_2)}{\onebx}{\oneby}
	\Edge[label={1,$2(n_c-1)\lambda_q$},style={bend left=40,sloped}](b_q)(t)
    \draw[opacity=0,line width=0cm] (b_q) to [bend left=40,sloped]  coordinate[pos=0.5] (mid_b_q_t_1)(t);
    \Edge[label={1,0},style={bend right=24}](b_q)(t)
    \gettikzxy{(mid_b_q_t_1)}{\qax}{\qay}
	\Edge[label={1,$2\lambda_q$},style={bend right=5,sloped}](b_q)(t)
    \draw[opacity=0,line width=0cm] (b_q) to [bend right=5,sloped]  coordinate[pos=0.5] (mid_b_q_t_2)(t);
    \gettikzxy{(mid_b_q_t_2)}{\qbx}{\qby}
	\Edge[style={bend right=5}](b_d)(t) \Edge[style={bend right=38}](b_d)(t) \Edge[style={bend right=45}](b_d)(t)
    \draw[opacity=0,line width=0cm] (b_d) to [bend right=5]  coordinate[pos=0.5] (mid_b_d_t_1)(t);
    \gettikzxy{(mid_b_d_t_1)}{\dax}{\day}
    \draw[opacity=0,line width=0cm] (b_d) to [bend right=38]  coordinate[pos=0.4] (mid_b_d_t_2)(t);
    \gettikzxy{(mid_b_d_t_2)}{\dbx}{\dby}
	\path ({(\qax+\qbx)/2}, \oneay) -- ({(\qax+\qbx)/2}, \oneby) node [font=\Large, midway, sloped] {$\dots$};
	\path ({(\qax+\qbx)/2}, \qay) -- ({(\qax+\qbx)/2}, \qby) node [font=\Large, midway, sloped] {$\dots$};
	\path ({(\qax+\qbx)/2}, \day) -- ({(\qax+\qbx)/2}, \dby) node [font=\Large, midway, sloped] {$\dots$};
\end{tikzpicture} \vspace{-2.3em}
\caption{Equivalent flow network diagram $G'$ for the optimization problem in \Cref{eqn:k_energy_avg}. Labeled edges show the capacity and the cost respectively. The amount of total flow to be sent is $n_c k$.} \label{fig:mcf}\vspace{-0.3em}
\end{figure}
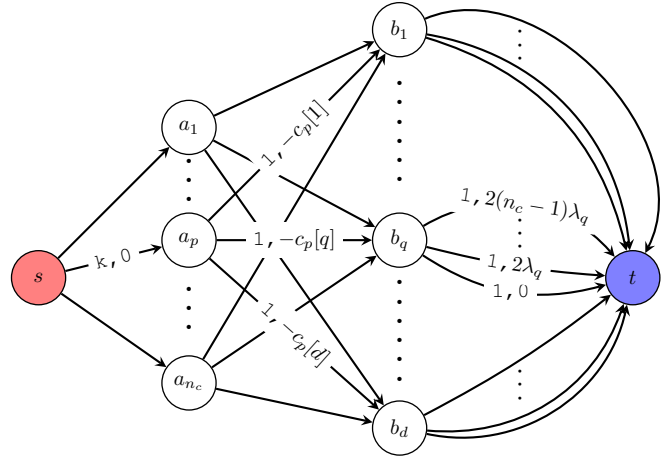

\subsection{Learning the embedding} \vspace{-0.3em}
\label{sec:method_embedding}
As the hash codes become more and more sparse, it becomes increasingly likely for hamming distances defined on binary codes \cite{hammingmetric, zhao2015deep} to become zero regardless of whether the input pair of data is similar or dissimilar. This phenomenon can be problematic when trained in a deep network because the back-propagation gradient would become zero and thus the embedding representations would not be updated at all. In this regard, we propose a distance function based on gated residual as shown in \Cref{eqn:h_distance}. This parameterization outputs zero distance only if the embedding representations of the two input data are identical at all the hash code activations. Concretely, given a pair of embedding vectors $f(\mathbf{x}_i; \bftheta), f(\mathbf{x}_j; \bftheta)$ and the corresponding binary k-sparse hash codes $\mathbf{h}_i, \mathbf{h}_j$, we define the following distance function $d_{ij}^{\text{ hash}}$ between the embedding vectors \vspace{-1.5em}

\small
\begin{align}
	d_{ij}^{\text{ hash}} = || \left(\mathbf{h}_i \lor \mathbf{h}_j\right) \odot \left(f(\mathbf{x}_i; \bftheta) - f(\mathbf{x}_j; \bftheta)\right) ||_1,
	\label{eqn:h_distance} \vspace{-1em}
\end{align}
\normalsize
where $\lor$ denotes the logical \emph{or} operation of the two binary hash codes and $\odot$ denotes the element-wise multiplication. Then, using the distance function above, we can define the following subproblems using any existing deep metric learning methods \cite{triplet, facenet, liftedstruct, npairs, facility}. Given a set of binary hash codes $\{\bfh_i\}_{i=1}^n$, we seek to solve the following subproblems where the task is to optimize the embedding representations so that similar pairs of data get hashed to the same buckets and dissimilar pairs of data get hashed to different buckets. In other words, we need similar pairs of data to have similar embedding representations indexed at the activated hash code dimensions and vice versa. In terms of the hash code optimization in \Cref{eqn:upperbound}, updating the network weight has the effect of tightening the bound gap $M(\bftheta)$.

\Cref{eqn:h_triplet} and \Cref{eqn:h_npairs} show the subproblems defined on the distance function above using \emph{Triplet} \cite{facenet} and \emph{Npairs} \cite{npairs} method respectively. We optimize these embedding subproblems by updating the network parameter $\bftheta$ via stochastic gradient descent using the subgradients $\frac{\partial \ell_{\text{metric}}(\bftheta; \bfh_{1,\ldots,n})}{\partial \bftheta}$ given the hash codes per each mini-batch. \vspace{-1.5em}

\small
\begin{align}
	&\minimize_{\bftheta}~ \underbrace{\frac{1}{|\mathcal{T}|} \sum_{(i,j,k) \in \mathcal{T}} [ d_{ij}^{\text{ hash}} + \alpha - d_{ik}^{\text{ hash}} ]_+}_{\ell_{\text{triplet}}(\bftheta; ~\bfh_{1,\ldots,n})} \nonumber\\
	&\text{\ \ subject to }~ || f(\mathbf{x; \bftheta}) ||_2 = 1,
	\label{eqn:h_triplet}
\end{align}\vspace{0.5em}
\normalsize 
where $\mathcal{T} = \{ (\bfx_i, \bfx_i^+, \bfx_i^-) \}_i$ is the set of triplets \cite{facenet}, and the embedding vectors are normalized onto unit hypersphere $||f(\mathbf{x}; \bftheta)||_2 = 1, ~\forall \mathbf{x} \in \mathcal{X}$. We also apply the \emph{semi-hard negative mining} procedure \cite{facenet} where hard negatives farther than the distance between the anchor and positives are mined within the mini-batch. In practice, since our method can be applied to any deep metric learning methods, we use existing deep metric learning implementations available in \href{https://www.tensorflow.org/versions/master/api_docs/python/tf/contrib/losses/metric_learning}{\texttt{tf.contrib.losses.metric\_learning}}. Similarly, we could also employ \emph{npairs} \cite{npairs} method, \vspace{-1.5em}

\small
\begin{align}
	\minimize_{\bftheta}& \underbrace{\frac{-1}{|\mathcal{P}|} \sum_{(i,j) \in \mathcal{P}} \log \frac{\exp(-d_{ij}^{\text{ hash}})}{\exp(-d_{ij}^{\text{ hash}}) + \sum_{k: y_k \neq y_i} \exp(-d_{ik}^{\text{ hash}}) }}_{\ell_{\text{npairs}}(\bftheta; ~\bfh_{1,\ldots,n})} \nonumber\\
	&+ \frac{\lambda}{m} \sum_{i} || f(\mathbf{x}_i; \bftheta) ||_2^2, 
	\label{eqn:h_npairs}
\end{align}
\normalsize 
where the \emph{npairs} mini-batch $B$ is constructed with positive pairs $(\mathbf{x}, \mathbf{x}^+)$ which are negative with respect to all other pairs. $B=\{(\mathbf{x}_1, \mathbf{x}_1^+), \ldots, (\mathbf{x}_n, \mathbf{x}_n^+))\}$ and $\mathcal{P}$ denotes the set of all positive pairs within the mini-batch. We use the existing implementation of npairs loss in Tensorflow as well. Note that even though the distance $d_{ij}^{\text{ hash}}$ is defined \emph{after} masking the embeddings with the union binary vector $(\mathbf{h}_i \lor \mathbf{h}_j)$, it's important to normalize or regularize the embedding representation \emph{before} the masking operations for the optimization stability due to the sparse nature of the hash codes. \vspace{0.5em}

\begin{algorithm}[!hbtp]
   \small
   \caption{Learning algorithm}
   \label{alg:procedure}
   \begin{algorithmic}[1]
      \INPUT $\bftheta_b^{\text{emb}}$ (pretrained metric learning base model); $\bftheta_d \in \mathbb{R}^d$
      \REQUIRE $\bftheta_f = [\bftheta_b, \bftheta_d]$
      \FOR{ $t=1,\ldots,$ MAXITER}
      \STATE Sample a minibatch $\{\bfx_j\}$
      \STATE Update the flow network $G'$ by recomputing the cost vectors for all classes in the minibatch\\
      $\bfc_i = \frac{1}{m} \sum_{k: y_k = i} f(\bfx_k; \bftheta_f)$
      \STATE Compute the hash codes $\{\bfh_i\}$ minimizing \Cref{eqn:k_energy_avg} via finding the minimum cost flow on $G'$
      \STATE Update the network parameter given the hash codes\\
      $\bftheta_f \leftarrow \bftheta_f - \eta^{(t)} \partial \ell_\text{metric}(\bftheta_f; ~\bfh_{1,\ldots,n_c}) / \partial \bftheta_f$
      \STATE Update stepsize $\eta^{(t)} \leftarrow$ ADAM rule \cite{adam}
   \ENDFOR
   \OUTPUT $\bftheta_{f}$ (final estimate);
\end{algorithmic}
\end{algorithm} \vspace{-0.2em}
\subsection{Query efficiency analysis} \vspace{-0.3em}
In this subsection, we examine the expectation and the variance of the query time speed up over linear search. Recall the properties of the compound hash code defined in \Cref{sec:formulation}, $\bfh \in \{0,1\}^d$ and $||\bfh||_1 = k$. Given $n$ such hash codes, we have that $\mathrm{Pr}(\bfh_i^\intercal \bfh_j = 0) = \binom{d-k}{k}/\binom{d}{k}$ assuming the hash code uniformly distributes the items throughout different buckets. For a given hash code $\bfh_q$, the number of retrieved data is $N_q = \sum_{i \neq q} \mathds{1}(\bfh_i^\intercal \bfh_q\neq 0)$. Then, the expected number of retrieved data is $\mathbb{E}[N_q] = (n-1)\left( 1- \binom{d-k}{k}/\binom{d}{k} \right)$. Thus, in contrast to linear search, the expected speedup factor (SUF) under perfectly uniform distribution of the hash code is \vspace{-2em}

\small
\begin{align}
\mathbb{E}[\text{SUF}] = \left( 1- \frac{\binom{d-k}{k}}{\binom{d}{k}} \right)^{-1} 
\label{eqn:suf}
\end{align} 
\normalsize
In the case where $d \gg k$, the speedup factor approaches $\left(\frac{d}{k^2}\right)$. Similarly, we have that the variance is $V[N_q] = (n-1) \Big{(} 1- \binom{d-k}{k}/\binom{d}{k} \Big{)}\binom{d-k}{k}/\binom{d}{k}$. \vspace{-0.1em}

\section{Implementation details} \vspace{-0.3em}
\label{sec:implementation}
\textbf{Network architecture} ~In our experiments, we used the NIN \cite{NIN}  architecture (denote the parameters as $\bftheta_b$) with \emph{leaky relu} \cite{xu2015empirical} with $\alpha=5.5$ as activation function and trained Triplet embedding network with semi-hard negative mining \cite{facenet} and Npairs network \cite{npairs} from scratch as the base model. We snapshot the network weights ($\bftheta_b^{\text{emb}}$) of the learned base model. Then we replace the last layer in ($\bftheta_b^{\text{emb}}$) with a randomly initialized $d$ dimensional fully connected projection layer ($\bftheta_d$) and finetune the hash network (denote the parameters as $\bftheta_f = [\bftheta_b, \bftheta_d]$). \Cref{alg:procedure} summarizes the training procedure in detail. \vspace{-0.2em}

\textbf{Hash table construction and query} ~We use the learned hash network $\bftheta_f$ and apply \Cref{eqn:hash_function} to convert a hash data $\bfx_i$ into the hash code $\bfh(\bfx_i; \bftheta_f)$ and use the base embedding network $\bftheta_b^{\text{emb}}$ to convert the data into the embedding representation $f(\bfx_i; \bftheta_b^{\text{emb}})$. Then, the embedding representation is hashed to buckets corresponding to the $k$ set bits in the hash code. We use the similar procedure and convert a query data $\bfx_q$ into the hash code $\bfh(\bfx_q; \bftheta_f)$ and into the embedding representation $f(\bfx_q; \bftheta_b^{\text{emb}})$. Once we retrieve the union of all bucket items indexed at the $k$ set bits in the hash code, we apply a reranking procedure \cite{survey_learningtohash} based on the euclidean distance in the embedding representation space. \vspace{-0.2em}

\textbf{Evaluation metrics} ~We report our accuracy results using precision@k (Pr@k) and normalized mutual information (NMI) \cite{manningbook} metrics. Precision@k is computed based on the reranked ordering (described above) of the retrieved items from the hash table. We evaluate NMI, when the code sparsity is set to $k=1$, treating each bucket as individual clusters. In this setting, NMI becomes perfect, if each bucket has perfect class purity (pathologically putting one item per each bucket is prevented by construction since $d \ll n$). We report the speedup results by comparing the number of retrieved items versus the total number of data (exhaustive linear search) and denote this metric as SUF. As the hash code becomes uniformly distributed, SUF metric approaches the theoretical expected speedup in \Cref{eqn:suf}. \Cref{fig:SUF} shows that the measured SUF of our method closely follows the theoretical upper bound in contrast to other methods.\vspace{-0.5em}

\section{Experiments} \vspace{-0.3em}
\label{sec:exp}
We report our results on Cifar-100 \cite{cifar100} and ImageNet \cite{imagenet} datasets and compare the accuracy against several baseline methods. First baseline methods are the state of the art deep metric learning models \cite{facenet, npairs} performing an exhaustive linear search over the whole dataset given a query data. Another baselines are the Binarization transform \cite{agrawal2014,zhai2017} methods where the dimensions of the hash code corresponding to the top $k$ dimensions of the embedding representation are set. We also perform vector quantization \cite{survey_learningtohash} on the learned embedding representation from the deep metric learning methods above on the entire dataset and compute the hash code based on the indices of the $k$ nearest centroids. \vspace{-0.4em} `Triplet' and `Npairs' denotes the deep metric learning base models performing an exhaust linear search per each query. `Th' denotes the binarization transform baseline, `VQ' denotes the vector quantization baseline.

\subsection{Cifar-100} \vspace{-0.3em}
Cifar-100 \cite{cifar100} dataset has $100$ classes. Each class has $500$ images for \emph{train} and $100$ images for \emph{test}.  Given a query image from \emph{test}, we experiment the search performance both when the hash table is constructed from \emph{train} and from \emph{test}.  We subtract the per-pixel mean of training images across all the images and augmented the dataset by zero-padding 4 pixels on each side, randomly cropping $32\times 32$, and applying random horizontal flipping.  The batch size is set to $128$. The metric learning base model is trained for $175$k iterations, and learning rate decays to $0.1$ of previous learning rate after $100$k iterations.  We finetune the base model for $70$k iterations and decayed the learning rate to $0.1$ of previous learning rate after $40$k iterations. \Cref{tab:cifar_triplet_train_test} show results using the triplet network with $d\!=\!256$ and \Cref{tab:cifar_npairs_train_test} show results using the npairs network with $d\!=\!64$.  The results show that our method not only outperforms search accuracies of the state of the art deep metric learning base models but also provides up to $98\times$ speed up over exhaustive search.

\begin{table}[htbp]
\centering
\tiny
\begin{adjustbox}{max width=\columnwidth}
    \begin{tabular}{cc cccc cccc}
    \cmidrule[1pt](r){1-6} \cmidrule[1pt](r){7-10}
        &      &  \multicolumn{4}{c}{\emph{train}}                                & \multicolumn{4}{c}{\emph{test}}\\
    \cmidrule[1pt](r){1-6} \cmidrule[1pt](r){7-10}
                &Method& SUF& Pr@1  & Pr@4  & Pr@16 & SUF   & Pr@1  & Pr@4  & Pr@16 \\
    \cmidrule[1pt](r){1-6} \cmidrule[1pt](r){7-10}
                & Triplet & 1.00 &  62.64 & 61.91 & 61.22 & 1.00 &  56.78 & 55.99 & 53.95 \\
    \cmidrule[1pt](r){1-6} \cmidrule[1pt](r){7-10}
    \multirow{3}{*}{$k\!=\!1$}
    		  & Triplet-Th  & 43.19  &  61.56 & 60.24 & 58.23 & 41.21 &  54.82 & 52.88 & 48.03 \\ 
              & Triplet-VQ  & 40.35 & 62.54 & 61.78 & 60.98  & 22.78 & 56.74 & 55.94 & 53.77 \\ 
              & Triplet-Ours & \textbf{97.77} &  \textbf{63.85} & \textbf{63.40}  & \textbf{63.39} & \textbf{97.67} &  \textbf{57.63} & \textbf{57.16} & \textbf{55.76} \\
    \cmidrule[1pt](r){1-6} \cmidrule[1pt](r){7-10}
    \multirow{3}{*}{$k\!=\!2$}
    		  & Triplet-Th  & 15.34 &  62.41 & 61.68 & 60.89 & 14.82 &  56.55 & 55.62 & 52.90  \\  
              & Triplet-VQ  & 6.94 &  62.66 & 61.92 & 61.26 & 5.63 &  56.78 & 56.00    & 53.99 \\ 
              & Triplet-Ours & \textbf{78.28} &  \textbf{63.60}  & \textbf{63.19} & \textbf{63.09}  & \textbf{76.12} &  \textbf{57.30}  & \textbf{56.70}  & \textbf{55.19} \\
    \cmidrule[1pt](r){1-6} \cmidrule[1pt](r){7-10}
    \multirow{3}{*}{$k\!=\!3$}
    		  & Triplet-Th  & 8.04 &  62.66 & 61.88 & 61.16 & 7.84 &  56.78 & 55.91 & 53.64 \\ 
              & Triplet-VQ  & 2.96 &  62.62 & 61.92 & 61.22 & 2.83 &  56.78 & 55.99 & 53.95 \\  
              & Triplet-Ours & \textbf{44.36} &  \textbf{62.87} & \textbf{62.22} & \textbf{61.84} & \textbf{42.12} & \textbf{56.97} & \textbf{56.25} & \textbf{54.40}  \\
    \cmidrule[1pt](r){1-6} \cmidrule[1pt](r){7-10}
    \multirow{3}{*}{$k\!=\!4$}
     		  & Triplet-Th  & 5.00 &  62.66 & 61.94 & 61.24  & 4.90 & 56.84 & 56.01 & 53.86 \\  
              & Triplet-VQ  & 1.97 & 62.62 & 61.91 & 61.22  & 1.91 & 56.77 & 55.99 & 53.94 \\ 
              & Triplet-Ours & \textbf{16.52} &  \textbf{62.81} & \textbf{62.14} & \textbf{61.58} & \textbf{16.19} &  \textbf{57.11} & \textbf{56.21} & \textbf{54.20}  \\
    \cmidrule[1pt](r){1-6} \cmidrule[1pt](r){7-10}
    \end{tabular}
\end{adjustbox} \vspace{-1em}
    \caption{Results with Triplet network with hard negative mining. Querying Cifar-100 \emph{test} data against hash tables built on \emph{train} set and on \emph{test} set.}
\label{tab:cifar_triplet_train_test} \vspace{-1em}
\end{table}

\begin{table}[htbp]
\centering
\tiny
\begin{adjustbox}{max width=\columnwidth}
    \begin{tabular}{cc cccc cccc}
    \cmidrule[1pt](r){1-6} \cmidrule[1pt](r){7-10}
        &      &  \multicolumn{4}{c}{\emph{train}}                                & \multicolumn{4}{c}{\emph{test}}\\
    \cmidrule[1pt](r){1-6} \cmidrule[1pt](r){7-10}
                &Method& SUF& Pr@1  & Pr@4  & Pr@16 & SUF   & Pr@1  & Pr@4  & Pr@16 \\
    \cmidrule[1pt](r){1-6} \cmidrule[1pt](r){7-10}
                 & Npairs & 1.00 &61.78  & 60.63 & 59.73 &1.00  & 57.05 & 55.70 & 53.91 \\
    \cmidrule[1pt](r){1-6} \cmidrule[1pt](r){7-10}
    \multirow{3}{*}{$k\!=\!1$}
        & Npairs-Th  & 13.65 & 60.80  & 59.49 & 57.27  & 12.72 &  54.95 & 52.60 & 47.16 \\  
         & Npairs-VQ  & 31.35 &  61.22 & 60.24 & 59.34  & 34.86 & 56.76 & 55.35 & 53.75 \\ 
         & Npairs-Ours & \textbf{54.90} &  \textbf{63.11} & \textbf{62.29} & \textbf{61.94} & \textbf{54.85} & \textbf{58.19} & \textbf{57.22} & \textbf{55.87} \\
    \cmidrule[1pt](r){1-6} \cmidrule[1pt](r){7-10}
    \multirow{3}{*}{$k\!=\!2$}
        & Npairs-Th  & 5.36  &  61.65 & 60.50  & 59.50   & 5.09 &  56.52 & 55.28 & 53.04 \\ 
         & Npairs-VQ  & 5.44  &  61.82 & 60.56 & 59.70 & 6.08 &  57.13 & 55.74 & 53.90  \\  
         & Npairs-Ours & \textbf{16.51} &  \textbf{61.98} & \textbf{60.93} & \textbf{60.15} & \textbf{16.20} &  \textbf{57.27} & \textbf{55.98} & \textbf{54.42} \\ 
    \cmidrule[1pt](r){1-6} \cmidrule[1pt](r){7-10}
    \multirow{3}{*}{$k\!=\!3$}
        & Npairs-Th  & 3.21 &  61.75 & 60.66 & 59.73  & 3.10 &  56.97 & 55.56 & 53.76 \\  
         & Npairs-VQ  & 2.36  &  61.78 & 60.62 & 59.73 & 2.66 &  57.01 & 55.69 & 53.90  \\  
         & Npairs-Ours & \textbf{7.32} &  \textbf{61.90}  & \textbf{60.80}  & \textbf{59.96} & \textbf{7.25} &  \textbf{57.15} & \textbf{55.81} & \textbf{54.10}  \\
    \cmidrule[1pt](r){1-6} \cmidrule[1pt](r){7-10}
    \multirow{3}{*}{$k\!=\!4$}
        & Npairs-Th  & 2.30 & 61.78 & 60.66 & 59.75  & 2.25 &  57.02 & 55.64 & 53.88 \\ 
         & Npairs-VQ  & 1.55 &  61.78 & 60.62 & 59.73  & 1.66 & 57.03 & 55.70  & 53.91 \\  
         & Npairs-Ours & \textbf{4.52} & \textbf{61.81} & \textbf{60.69} & \textbf{59.77} & \textbf{4.51} & \textbf{57.15} & \textbf{55.77} & \textbf{54.01} \\
    \cmidrule[1pt](r){1-6} \cmidrule[1pt](r){7-10}
    \end{tabular}
\end{adjustbox} \vspace{-1em}
    \caption{Results with Npairs \cite{npairs} network. Querying Cifar-100 \emph{test} data against hash tables built on \emph{train} set and on \emph{test} set.}
\label{tab:cifar_npairs_train_test} \vspace{-1em}
\end{table}

\subsection{ImageNet}  \vspace{-0.3em}
ImageNet ILSVRC-2012 \cite{imagenet} dataset has $1,000$ classes and comes with \emph{train} ($1,281,167$ images) and \emph{val} set ($50,000$ images). We use the first nine splits of \emph{train} set to train our model, the last split of \emph{train} set for validation, and use \emph{validation} dataset to test the query performance. We use the images downsampled to $32\times32$ from \cite{imgnet-down}. Preprocessing step is identical with cifar-100 and we used the pixel mean provided in the dataset. The batch size for the metric learning base model is set to $512$ and is trained for $450$k iterations, and learning rate decays to $0.3$ of previous learning rate after each $200$k iterations. When we finetune npairs base model for $d\!=\!512$, we set the batch size to $1024$ and total iterations to $35$k with decaying the learning rate to $0.3$ of previous learning rate after each $15$k iterations. When we finetune the triplet base model for $d\!=\!256$, we set the batch size to $512$ and total iterations to $70$k with decaying the learning rate to $0.3$ of previous learning rate after each $30$k iterations. Our results in \Cref{tab:imagenet_npairs} and \Cref{tab:imagenet_triplet} show that our method outperforms the state of the art deep metric learning base models in search accuracy while providing up to $478\times$ speed up over exhaustive linear search. \Cref{tab:NMI} compares the NMI metric and shows that the hash table constructed from our representation yields buckets with significantly better class purity on both datasets and on both methods. 

\begin{table}[h]
\centering
\footnotesize
\begin{adjustbox}{max width=\columnwidth}
    \begin{tabular}{ccccccc}
        \toprule[1pt]
        & Method & SUF &  Pr@1  & Pr@4  & Pr@16 \\\midrule[1pt]
	 &Npairs&     1.00&                 15.73&                 13.75&                  11.08\\
        \midrule[1pt]
    \multirow{3}{*}{$k\!=\!1$} 
        &Npairs-Th&     1.74&                    15.06&                 12.92&                   9.92\\
        &Npairs-VQ&    451.42&                   15.20&      	       13.27&                  10.96\\
        &Npairs-Ours&   \textbf{478.46}&      \textbf{16.95}&      \textbf{15.27}&     \textbf{13.06}\\ \midrule[1pt]
    \multirow{3}{*}{$k\!=\!2$}
        &Npairs-Th&     1.18&                    15.70&                 13.69&                  10.96\\
        &Npairs-VQ&   116.26&                  15.62&                 13.68&                  11.15\\
        &Npairs-Ours&   \textbf{116.61}&                \textbf{16.40}&             \textbf{14.49}&              \textbf{12.00}\\ \midrule[1pt]
    \multirow{3}{*}{$k\!=\!3$}
        &Npairs-Th&     1.07&                   15.73&                 13.74&                  11.07\\
        &Npairs-VQ&    \textbf{55.80}&                  15.74&                 13.74&                  11.12\\
        &Npairs-Ours&    53.98&                   \textbf{16.24}&                 \textbf{14.32}&                  \textbf{11.73}\\ \bottomrule[1pt]
    \end{tabular}
\end{adjustbox} \vspace{-0.2em}
\caption{Results with Npairs \cite{npairs} network. Querying ImageNet \emph{val} data against hash table built on \emph{val} set.}
\label{tab:imagenet_npairs}
\end{table}

\begin{table}[h]
\centering
\footnotesize
\begin{adjustbox}{max width=\columnwidth}
    \begin{tabular}{cccccc}
    \toprule[1pt]
        & Method & SUF &  Pr@1  & Pr@4  & Pr@16 \\\midrule[1pt]
        & Triplet& 1.00&   10.90 & 9.39  &  7.45 \\ \midrule[1pt]
    \multirow{3}{*}{$k\!=\!1$}
     & Triplet-Th &  18.81&  10.20&  8.58&  6.50\\
       & Triplet-VQ & 146.26&  10.37& 8.84& 6.90\\
        & Triplet-Ours&   \textbf{221.49}&    \textbf{11.00}& \textbf{9.59}& \textbf{7.83} \\ \midrule[1pt]
    \multirow{3}{*}{$k\!=\!2$}
        &Triplet-Th&     6.33&                  10.82&                  9.30&                   7.32\\
    &Triplet-VQ&    32.83&                 10.88&                  9.33&                   7.39\\
        &Triplet-Ours&    \textbf{60.25}&                   \textbf{11.10}&                 \textbf{9.64}&                   \textbf{7.73}\\ \midrule[1pt]
    \multirow{3}{*}{$k\!=\!3$}
        &Triplet-Th&     3.64&                 10.87&                  9.38&                   7.42\\
        &Triplet-VQ&    13.85&                    10.90&                  9.38&                   7.44\\
        &Triplet-Ours&    \textbf{27.16}&               \textbf{11.20}&               \textbf{9.55}&                \textbf{7.60}\\ \bottomrule[1pt]
    \end{tabular}
\end{adjustbox} \vspace{-0.2em}
\caption{Results with Triplet network with hard negative mining. Querying ImageNet \emph{val} data against hash table built on \emph{val} set.}
\label{tab:imagenet_triplet} 
\end{table}

\begin{table}[h]
    \vspace{1em}
	\centering
	\footnotesize
	\begin{adjustbox}{max width=\columnwidth}
		\begin{tabular}{cccc}
            \toprule[1pt]
                              & \multicolumn{2}{c}{Cifar-100} & ImageNet \\ \cmidrule[1pt](r){2-3} \cmidrule[1pt](r){4-4}
                              & train         & test          & val      \\ \midrule[1pt]
			Triplet-Th   & 68.20         & 54.95         & 31.62    \\
			Triplet-VQ   & 76.85         & 62.68         & 45.47    \\
            Triplet-Ours & \textbf{89.11}         & \textbf{68.95}         & \textbf{48.52}    \\ \midrule[1pt]
			Npairs-Th   & 51.46         & 44.32         & 15.20    \\
			Npairs-VQ   & 80.25         & 66.69         & 53.74    \\
            Npairs-Ours & \textbf{84.90}         & \textbf{68.56}         & \textbf{55.09}    \\ \bottomrule[1pt]
		\end{tabular}
	\end{adjustbox} \vspace{-0.2em}
	\caption{Hash table NMI for Cifar-100 and Imagenet.}
	\label{tab:NMI} 
\end{table}

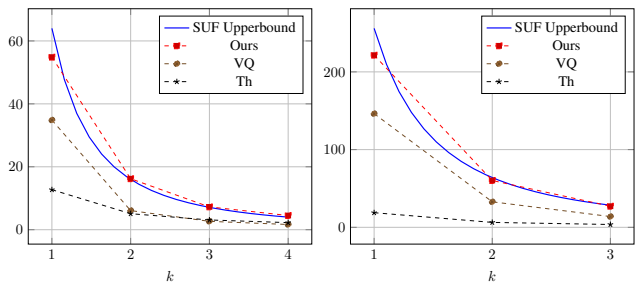
\begin{figure}[h]
    \vspace{-2em}
	\centering
	\hspace{-1.2em}
	\subfigure{
		\begin{tikzpicture}[scale=.55]
		  \begin{axis}[ 
				legend pos={north east},
				xtick={1,2,3,4},
			    	xlabel=$k$,
			   	domain=1:4,
				grid=major] 
		\addplot+[samples=20, thick, mark=none] {64*x^-2};
		\addlegendentry{SUF Upperbound};
		\addplot+[dashed]
		    coordinates {
		    (1,54.85)(2,16.20)(3,7.25)(4,4.51)
		    };
		\addlegendentry{Ours};
		\addplot+[dashed]
		    coordinates {
		    (1,34.86)(2,6.08)(3,2.66)(4,1.66)
		    };
		\addlegendentry{VQ};
		\addplot+[dashed]
		    coordinates {
		    (1,12.72)(2,5.09)(3,3.10)(4,2.25)
		    };
		\addlegendentry{Th};
		  \end{axis}
		\end{tikzpicture}
	\begin{tikzpicture}[scale=.55]
	  \begin{axis}[ 
			legend pos={north east},
			xtick={1,2,3},
		   	 xlabel=$k$,
		   	domain=1:3,
			grid=major] 
	\addplot+[samples=20, thick, mark=none] {256*x^-2};
	\addlegendentry{SUF Upperbound};
    \addplot+[dashed] coordinates {
	    (1,221.49)(2,60.25)(3,27.16)
	    };
	\addlegendentry{Ours};
    \addplot+[dashed] coordinates {
	    (1,146.26)(2,32.83)(3,13.85)
	    };
	\addlegendentry{VQ};
    \addplot+[dashed] coordinates {
	    (1,18.81)(2,6.33)(3,3.64)
	    };
	\addlegendentry{Th};
	  \end{axis}
	\end{tikzpicture}
	} \vspace{-2em}
	\caption{SUF metric for on Cifar-100 and ImageNet respectively.}
	\label{fig:SUF}  \vspace{-0.5em}
\end{figure}

\section{Conclusion} \vspace{-0.5em}
We have presented a novel end-to-end optimization algorithm for jointly learning a quantizable embedding representation and the sparse binary hash code which then can be used to construct a hash table for efficient inference. We also show an interesting connection between finding the optimal sparse binary hash code and solving a \emph{minimum cost flow} problem. Our experiments show that the proposed algorithm not only achieves the state of the art search accuracy outperforming the previous state of the art deep metric learning approaches \cite{facenet, npairs} but also provides up to $98\times$ and $478\times$ search speedup on Cifar-100 and ImageNet datasets respectively.

\section*{Acknowledgements} \vspace{-0.5em}
We would like to thank Zhen Li at Google Research for helpful discussions and anonymous reviewers for their constructive comments. This work was partially supported by Kakao, Kakao Brain and Basic Science Research Program through the National Research Foundation of Korea (NRF) (2017R1E1A1A01077431). Hyun Oh Song is the corresponding author.

\newpage
\bibliography{icml} 
\bibliographystyle{icml2018}
\end{document}